  \providecommand\BibTeX{{%
    \normalfont B\kern-0.5em{\scshape i\kern-0.25em b}\kern-0.8em\TeX}}}
\newtheorem{theorem}{Theorem}[section]
\newtheorem{lemma}[theorem]{Lemma}
\newtheorem{remark}{Remark}
\begin{document}

\fancyhead{} 

\title{Combinatorial Black-Box Optimization with Expert Advice}


\author{Hamid Dadkhahi}
\affiliation{%
  \institution{IBM Research}
}
\email{hdadkhahi@ibm.com}

\author{Karthikeyan Shanmugam}
\affiliation{%
  \institution{IBM Research}
}
\email{karthikeyan.shanmugam2@ibm.com}

\author{Jesus Rios}
\affiliation{%
  \institution{IBM Research}
}
\email{jriosal@us.ibm.com}

\author{Payel Das}
\affiliation{%
  \institution{IBM Research}
}
\email{daspa@us.ibm.com}

\author{Samuel C. Hoffman}
\affiliation{%
  \institution{IBM Research}
}
\email{shoffman@ibm.com}

\author{Troy David Loeffler}
\affiliation{%
  \institution{Argonne National Laboratory}
}
\email{tloeffler@anl.gov}

\author{Subramanian\:Sankaranarayanan}
\affiliation{%
  \institution{Argonne National Laboratory}
  \institution{University of Illinois at Chicago}
}
\email{skrssank@uic.edu}


\renewcommand{\shortauthors}{Dadkhahi, Shanmugam and Rios, et al.}

\begin{CCSXML}
<ccs2012>
<concept>
<concept_id>10003752.10003809.10010047.10010048</concept_id>
<concept_desc>Theory of computation~Online learning algorithms</concept_desc>
<concept_significance>300</concept_significance>
</concept>
<concept>
<concept_id>10003752.10010070.10010071.10010073</concept_id>
<concept_desc>Theory of computation~Boolean function learning</concept_desc>
<concept_significance>300</concept_significance>
</concept>
<concept>
<concept_id>10003752.10010070.10010071.10010079</concept_id>
<concept_desc>Theory of computation~Online learning theory</concept_desc>
<concept_significance>300</concept_significance>
</concept>
<concept>
<concept_id>10003752.10010070.10010071.10010261.10010272</concept_id>
<concept_desc>Theory of computation~Sequential decision making</concept_desc>
<concept_significance>300</concept_significance>
</concept>
<concept>
<concept_id>10003752.10010070.10010071.10010286</concept_id>
<concept_desc>Theory of computation~Active learning</concept_desc>
<concept_significance>300</concept_significance>
</concept>
</ccs2012>
\end{CCSXML}
%

\keywords{combinatorial optimization, black-box functions, learning with expert advice, exponential weight update}

\begin{abstract}

We consider the problem of black-box function optimization over the Boolean hypercube. Despite the vast literature on black-box function optimization over continuous domains, not much attention has been paid to learning models for optimization over combinatorial domains until recently. However, the computational complexity of the recently devised algorithms are prohibitive even for moderate numbers of variables; drawing one sample using the existing algorithms is more expensive than a function evaluation for many black-box functions of interest. To address this problem, we propose a computationally efficient model learning algorithm based on multilinear polynomials and exponential weight updates. In the proposed algorithm, we alternate between simulated annealing with respect to the current polynomial representation and updating the weights using monomial experts' advice. Numerical experiments on various datasets in both unconstrained and sum-constrained Boolean optimization indicate the competitive performance of the proposed algorithm, while improving the computational time up to several orders of magnitude compared to state-of-the-art algorithms in the literature.

\end{abstract}

\maketitle

\section{Introduction}


Combinatorial optimization (CO) problems arise in numerous application domains including machine learning, engineering, economics, transport, healthcare, and natural and social sciences \cite{wolsey1999integer}. Broadly speaking, such CO problems involve optimizing an explicit function over a constraint set on a discrete domain. A number of important problems in this class are NP-hard and there is a vast literature on approximating them in polynomial time \cite{williamson2011design}. In this work, we focus on black-box combinatorial optimization where we seek to minimize a function defined on the Boolean domain (a combinatorial domain) through acquiring noisy/perfect function evaluations from a black-box oracle. 


There exists a vast literature on black-box function optimization when it comes to functions over the continuous domains. Bayesian Optimization (BO) \cite{movckus1975bayesian} is a well-established paradigm for optimizing costly-to-evaluate black-box objective functions $f$ with noisy evaluations. The latter paradigm consists of approximating $f$ using a probabilistic function model, often called a \textit{surrogate model}, and utilizing an \textit{acquisition function} along with the surrogate model to draw samples \cite{jones1998efficient}. Some common acquisition functions are Expected Improvement, Probability of Improvement, Thompson Sampling and Upper Confidence Bounds \cite{srinivas2009gaussian,thompson1933likelihood,auer2002using,mockus1994application}. 

Only recently, generic black-box optimization algorithms, such as BOCS \cite{BOCS} and COMBO \cite{COMBO}, have been proposed for combinatorial domains. However, learning the surrogate model followed by drawing a sample using either BOCS or COMBO, even for a moderate number of variables, is more expensive than an oracle evaluation for many black-box functions of interest. For larger numbers of variables, it is essentially impractical to use BOCS and COMBO, as it takes a significant amount of time to determine the next sample to evaluate.

In this work, we introduce an efficient black box optimization algorithm, that uses a multi-linear polynomial of bounded degree as the surrogate model and sequentially updates this model using exponential weight updates while treating each monomial as an expert. At each step, the acquisition function is a version of simulated annealing applied to the current multilinear polynomial representation given by the surrogate model.  Numerical experiments on various datasets in both unconstrained and sum-constrained Boolean optimization problems indicate the competitive performance of the proposed algorithm, while improving the computational time up to several orders of magnitude compared to state-of-the-art algorithms in the literature.

\subsection{Contributions}
We summarize our main contributions as follows:
\begin{enumerate}[leftmargin=5mm, itemsep=-1pt]
\item[1.] We propose a novel and computationally efficient algorithm for black-box function optimization over the Boolean hypercube. Our algorithm, Combinatorial Optimization with Monomial Experts (\texttt{COMEX}), comprises a pool of monomial experts forming an approximate multilinear polynomial representation for the black-box function. At any time step, the coefficients of the monomial experts are refreshed via an exponential weight update rule.

\item[2.] The proposed method uses a version of simulated annealing applied to the current polynomial representation to produce new candidate points for black-box function evaluations.

\item[3.] We present theoretical insights on the sequential improvements in the proposed surrogate model as a result of exponential weight updates. Furthermore, we offer theoretical results proving that samples drawn under an exponential acquisition function model lead to sequential improvements in the surrogate model under some technical conditions.

\item[4.] We evaluate the performance of the \texttt{COMEX} algorithm, together with recently developed state-of-the-art BO methods for the combinatorial domain as well as popular heuristic-based baseline methods, on a variety of benchmark problems of different dimensionality. The CO problems  investigated in this study are  sparsification of Ising models, noisy $n$-queens, food contamination control, and optimal arrangement of point defects in 2D nanomaterials. 

\item[5.] \texttt{COMEX} performs competitively on all benchmark problems of low dimensionality, while improving the computational time up to several orders of magnitude. On problems of higher dimensionality, \texttt{COMEX} outperforms all baseline and state-of-the-art BO methods in terms of finding a minimum within a finite time budget. 
\end{enumerate}




\section{Related Work}

The existing algorithms in discrete optimization literature, which are capable of handling black-box functions, are not particularly sample efficient; in many applications, a large evaluation budget is required for such algorithms to converge to functions' optima. In addition, they are not necessarily guaranteed to find the global optima. The most popular algorithms in this category include local search \cite{Kirkpatrick1983, SA} and evolutionary search, such as particle search \cite{Schafer2013}. 


\textbf{Bayesian Optimization}:
The majority of work on black-box function optimization targets continuous domains. In particular, algorithms based on Bayesian Optimization \cite{BO} have attracted a lot of attention in the literature. Many popular BO methods are built on top of Gaussian Processes (GPs), which rely on the smoothness defined by a kernel to model uncertainty \cite{srinivas2009gaussian,thompson1933likelihood,mockus1994application}. As such, they are best suited for continuous spaces \cite{BO,hebbal2019bayesian,djolonga2013high,wang2013bayesian}. The only exceptions are the recently introduced algorithm BOCS \cite{BOCS} and COMBO \cite{COMBO}.

\textbf{Hyperparameter Optimization}:
Bayesian Optimization methods have been adapted to hyperparameter optimization \cite{bergstra2011algorithms}. Here, one seeks to find the best hyperparameter configuration that minimizes the validation loss after training a model with that configuration. In this adaptation of BO methods, the goal is to select the next hyperparameter configuration given the function outputs in the previous iterations. However, in hyperparameter optimization, the focus is on the total training time and not the total number of noisy evaluations of hyperparameters. Therefore, bandit-based and tree search methods which focus on resource allocation have been developed \cite{li2017hyperband,kandasamy2017multi,sen2018multi}. In our work, the main cost criterion is the number of function evaluations rather than other resources which can be controlled.

\textbf{Black-Box Combinatorial Optimization}: Similar to our proposed approach, the BOCS algorithm \cite{BOCS} employs a sparse monomial representation to model the interactions among different variables. However, the latter utilizes sparse Bayesian linear regression with a heavy-tailed horse-shoe prior to learn the coefficients of the model, which as we will discuss in the sequel, is computationally costly. A Gibbs sampler is then used in order to draw a sample from the posterior over the monomial coefficients. When the monomials are restricted at the order of two, the problem of minimizing the acquisition function is posed as a second order program which is solved via semidefinite programming. Alternatively, simulated annealing is advocated for higher order monomial models or so as to speed up the computation for the case of order-two monomials. 

More recently, the COMBO algorithm \cite{COMBO} was introduced to address the impediments of BOCS. A one-subgraph-per-variable model is utilized for various combinatorial choices of the variables; the collection of such subgraphs is then joined via graph Cartesian product to construct a combinatorial graph to model different combinations of variables. The Graph Fourier Transform (GFT) over the formed combinatorial graph is used to gauge the smoothness of the black-box function. A GP with a variant of the diffusion kernel, referred to as automatic relevance determination diffusion kernel, is proposed for which GFT can be carried out in a computationally tractable fashion. The proposed GP is capable of accounting for arbitrarily high orders of interactions among variables.

The computational complexity of surrogate-model learning in BOCS at time step $t$ is in $\mathcal{O}(t^2 \cdot d^{2 \, m})$, where $d$ is the number of variables and $m$ is the maximum monomial order in the model. This complexity is associated with the cost of sampling parameters from the posterior distribution. Hence, the complexity of BOCS grows quadratically in the number of evaluations, which particularly makes it unappealing for larger numbers of variables.
On the other hand, despite the fact that the GFT utilized by COMBO is shown to be run in linear time with respect to the number of variables for the Boolean case, the overall computational complexity of the algorithm remains prohibitively high. More precisely, the overall computational complexity of learning the surrogate model in COMBO is in $\mathcal{O}(\max\{t^3, d^2\})$. The $\mathcal{O}(t^3)$ complexity is associated with marginal likelihood computation for the GP, whereas the $O(d^2)$ term stems from the slice sampling utilized for fitting the parameters of the surrogate model. 
Therefore, both BOCS and COMBO incorporate model learning methods which grow polynomially in the number of function evaluations. This particularly hinders the applicability of the aforementioned algorithms for problems with moderate numbers of variables, since a larger number of function evaluations is required due to the curse of dimensionality.



\textbf{Prediction with Expert Advice}:
In the framework of prediction with expert advice \cite{prediction_ebook, Vovk1998, hedge}, at each time step $t$ the forecaster receives an instance from a fixed domain. The forecaster is given access to a set of $p$ experts, and is required to produce a distribution $w^t$ over such experts in each time step $t$. Each expert $i$ then incurs a loss $\ell_i^t$, which contributes to the mixture loss of the forecaster given by $\ell^t = \sum_i w_i^t \ell_i^t$. In general, there are no restrictions on the distribution of expert losses. The Hedge algorithm \cite{Vovk1998, hedge} is perhaps the most popular approach to address this problem via an exponential weight update rule given by $w_i^t = w_i^{t-1} \exp(- \eta_t \,\ell_i^t)$, where $\eta$ is a learning rate.

The prediction with expert advice paradigm has been tailored to the problem of sparse online linear prediction from individual sequences. In particular, the $\texttt{EG}^{\pm}$ algorithm \cite{KIVINEN1997} uses an exponential weight update rule to formulate an online linear regression algorithm which performs comparably to the best linear predictor under sparsity assumptions. The adaptive $\texttt{EG}^{\pm}$ algorithm \cite{adaptive_EG} further proposes a parameter-free version of $\texttt{EG}^{\pm}$ where the learning rate $\eta_t$ is updated in an adaptive fashion, and is a decreasing function of time step $t$.

\section{Method}
\label{sec:algorithm}

\subsection{Notations}

Sets are shown with calligraphic letters. For a set $\mathcal{C}$, $|\mathcal{C}|$ designates its cardinality. Matrices are indicated with uppercase letters; vectors and scalars are indicated with lowercase letters. Let $[n] = \{1,2,\ldots,n\}$. For a matrix $A \in \mathbb{R}^{n \times m}$, $A_{i, j}$ designates the element in row $i$ and column $j$. For a vector $x$, $x_i$ indicates its $i$-th entry. We use $\|x\|_p$ to denote the $\ell_p$ norm of a vector $x \in \mathbb{R}^n$, and denote the inner product by $\langle \cdot, \cdot \rangle$.

\subsection{Problem Statement}

We consider the problem of minimizing a black-box function over the Boolean hypercube. The black-box functions of interest are intrinsically expensive to evaluate, potentially noisy, 
and for which in general there is no trivial means to find the minimum.

More precisely, given a subset $\mathcal{C}$ of the Boolean hypercube $\mathcal{X} = \{-1, 1\}^d$, the objective is to find 
\begin{equation}
	x^* = \arg\min_{x \in \mathcal{C}} f(x)
\end{equation}
where $f$ is a real-valued Boolean function $f(x): \mathcal{X} \mapsto \mathbb{R}$. Exhaustive search requires $|\mathcal{C}|$ function evaluations; however, since evaluating the black-box function $f$ is expensive, we are interested in finding $x^*$ (or an approximation of it) in as few function evaluations as possible. In this problem, the performance of any algorithm is measured in terms of \textit{simple regret}, which is the difference between the best evaluation seen until time $t$ and the minimum function value $f(x^*)$: 
\begin{equation}
	R_t = \min_{i \in [t]} |f(x_i) - f(x^*)|.
\end{equation}

Two particularly important instances of such combinatorial structures are $(i)$ \textit{unconstrained optimization problems} where $\mathcal{C}$ includes the entire Boolean hypercube $\mathcal{X}$ where $|\mathcal{C}| = |\mathcal{X}| = 2^d$, and $(ii)$ \textit{optimization problems with a sum constraint} where $\mathcal{C}_n$ corresponds with the $n$-subsets of $[d]$ such that $\sum_i I(x_i = 1) = n$, where $I(.)$ is the indicator function. In the latter problem, we have $|\mathcal{C}_n| = \binom{d}{n}$.

We note that \textit{anytime} algorithms are particularly desirable for this problem for the following reasons: (1)  in many applications the evaluation budget is not known in advance, and (2) the algorithm is run until certain stopping criteria are met. One such stopping criteria is the finite time budget, which is measured as the total computational time required for the algorithm to produce samples to be evaluated by the black-box function, plus the evaluation time consumed by the black-box function of interest. 

\subsection{Surrogate Model}

In this work, we pursue the framework of using a surrogate model to approximate the black box function along with an acquisition function applied to this surrogate model. At each time step $t$, the surrogate model provides an estimate for the black-box function using the observations $\{(x_i, f(x_i)): i \in [t]\}$ acquired so far. Having been equipped with the new estimate model, the acquisition function selects a new candidate point $x_{t}$ for evaluation. The black-box function then returns the evaluation $f(x_{t})$ for the latter data point. This process is repeated until a stopping criterion, such as an evaluation budget or a time budget, is met.

Any real-valued Boolean function can be uniquely expressed by its \textit{multilinear polynomial} representation \cite{Boolean}:
\begin{equation}
	f(x) = \sum_{\mathcal{I} \subseteq [d]} \alpha^{*}_{\mathcal{I}} \psi_{\mathcal{I}}(x) 
\end{equation}
which is referred to as the \textit{Fourier} expansion of $f$, the real number $\alpha^{*}_{\mathcal{I}}$ is called the Fourier coefficient of $f$ on $\mathcal{I}$, and $\psi_{\mathcal{I}}(x) = \Pi_{i \in \mathcal{I}} x_i$ are monomials of order $|\mathcal{I}|$. The generality of Fourier expansions and the monomials' capability to capture interactions among different variables, make this representation particularly attractive for problems over the Boolean hypercube. 
In addition, in many applications of interest monomials of orders up to $m << d$ are sufficient to capture interactions among the variables, reducing the number of Fourier coefficients from $2^d$ to $p = \sum_{i=0}^m \binom{d}{i}$. This leads to the following approximate surrogate model for $f$:
\begin{equation}
	\widehat{f}_{\alpha}(x) = \sum_{i \in [p]} \alpha_{i} \psi_{i}(x). 
\end{equation}
We employ the latter representation as the surrogate model in our proposed algorithm.


\subsection{The \texttt{COMEX} Algorithm}
	
Motivated by the properties of the hedge algorithm \cite{arora2012multiplicative}, we adopt an exponential weight update rule for our surrogate model. More precisely,  we maintain a pool of monomials where each monomial term plays the role of an expert. In particular, we are interested in finding the optimal Fourier coefficient $\alpha_i$ for the \textit{monomial expert} $\psi_i$. Note that exponential weights are non-negative, while the Fourier coefficients could be either negative or positive. Following the same approach as sparse online linear regression literature \cite{KIVINEN1997}, we maintain two non-negative coefficients for each Fourier coefficient $\alpha_i^t$ at time step $t$: $\alpha_{i, +}^t$ and $\alpha_{i, -}^t$. The value of the Fourier coefficient is then obtained via the subtraction $\alpha_i^t = (\alpha_{i, +}^t - \alpha_{i, -}^t)$.

More specifically, our algorithm works in the following way. We initialize the monomial coefficients $\alpha_{i, -}$ and $\alpha_{i, +}$ ($\forall i \in [p]$) with a uniform prior. In each time step $t$, the algorithm produces a sample point $x_t$ via Simulated Annealing (SA) over its current estimate for the Fourier representation $\widehat{f}_{\alpha^t}$ with Fourier coefficients $\alpha^t$. We then observe the black-box function evaluation $f(x_t)$ for our query $x_t$. This leads to a mixture loss $\ell_t$ which is equal to the difference between the evaluations obtained by our estimate model and the black-box function. This mixture loss, in turn, leads to the individual losses $\ell_i^t = 2 \, \lambda \, \ell_t \, \psi_i(x_t)$ for the monomial experts $\psi_i: \forall i \in [p]$. Finally, we update the current estimate for the Fourier coefficients $\alpha^t$ via the exponential weight update rule, incorporating the incurred losses. We repeat this process until the stopping criteria are met. Note that we use the anytime learning rate schedule of \cite{adaptive_EG}, which is a decreasing function of time $t$ (see Appendix \ref{app:learning_rate} for more details). A summary of the proposed algorithm, which we refer to as \textit{Combinatorial Optimization with Monomial Experts} (\texttt{COMEX}), is given in Algorithm \ref{algo:COMEX}. \\



\begin{algorithm}[!t]
\caption{Combinatorial Optimization with Monomial Experts}
\begin{algorithmic}[1]
\STATE \textbf{Inputs:} sparsity $\lambda$, maximum monomial order $m$
\STATE $t = 0$
\STATE $\forall \gamma \in \{-, +\} \: \textrm{and} \: \, \forall i \in [p]: \alpha^t_{i, \gamma} = \tfrac{1}{2 p}$
\REPEAT
\STATE $x_t \sim \widehat{f}_{\alpha^t} \;$ via Algorithm \ref{algo:SA}
\STATE Observe $\, f(x_t)$
\STATE $\widehat{f}_{\alpha^t}(x) \gets \sum_{i \in [p]} \big (\alpha^{t}_{i, +} - \alpha^{t}_{i, -} \big )~\psi_i(x) $.
\STATE $\ell^{t+1} \gets \widehat{f}_{\alpha^t}(x_t) - f(x_t)$
\FOR {$i \in [p] \: \textrm{and} \: \gamma \in \{-, +\}$}
\STATE $\ell_i^{t+1} \gets 2 \, \lambda \, \ell^{t+1} \, \psi_i(x_t)$
\STATE $\alpha^{t+1}_{i, \gamma} \gets \alpha^{t}_{i, \gamma} \exp \big (- \,\gamma \, \eta_t \, \ell_i^{t+1} \big)$
\STATE $\alpha^{t+1}_{i, \gamma} \gets \lambda \cdot \tfrac{\alpha^{t+1}_{i, \gamma}}{\sum_{\mu \in \{-, +\}} \sum_{j \in [p]} \alpha^{t+1}_{j, \mu}}$
\ENDFOR
\STATE $t \gets t + 1$
\UNTIL{Stopping Criteria}
\RETURN $\widehat{x}* = \arg\min_{\{x_i : \, \forall i \in [t]\}} f(x_i)$
\end{algorithmic}
\label{algo:COMEX}
\end{algorithm}


\textbf{Theoretical Insights}: Let $D_{\mathrm{KL}}(p||q)$ denote the KL divergence between two distributions $p$ and $q$, i.e. $D_{\mathrm{KL}}(p||q) = \sum_i p_i \log\big(\tfrac{p_i}{q_i}\big)$. We can show that the KL-divergence between the estimate and true Fourier coefficients decreases over time, assuming that the true Fourier coefficients $\alpha^*$ are non-negative, and form a distribution, i.e. $\sum_i \alpha^*_i = 1$. Define $\phi_t = D_{\mathrm{KL}} (\alpha^*||\alpha^t)$ as the KL divergence between $\alpha^t$ and $\alpha^*$, where $\alpha^t$ are the estimates of Fourier coefficients at time $t$. With respect to Algorithm \ref{algo:COMEX}, $\alpha_i^t=\alpha_{i,+}^t$ and $\alpha_{i,-}^t=0$ in this case.
\begin{lemma}
\label{lemma_positive}
The exponential weight update at any time step $t$ for the Fourier coefficients $\alpha^t$, under the above stated assumption of non-negativity of the true Fourier coefficients $\alpha^*$, yields 
$$\phi_{t-1} \geq \phi_t + \eta \, 2 \, \lambda \, \big (\hat{f}_{\alpha_t}(x_t)-f(x_t) \big)^2 -  \eta^2$$
for $\eta < \frac{1}{8 \lambda}$.
\end{lemma}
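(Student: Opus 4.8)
The plan is to run a one-step relative-entropy (drift) argument of the kind standard in the analysis of exponential-weight / $\texttt{EG}^{\pm}$ updates. Since the lemma concerns the purely non-negative regime, only the $+$ copies of the weights are active ($\alpha^t_{i,-}=0$), and line 12 of Algorithm \ref{algo:COMEX} keeps $\alpha^t=(\alpha^t_{i,+})_i$ a (scaled) probability vector whose total mass matches that of $\alpha^*$. I would therefore write the one-step change of the potential $\phi$ directly from the update rule as
\begin{equation*}
\phi_{t-1}-\phi_t=\sum_i\alpha^*_i\log\frac{\alpha^t_i}{\alpha^{t-1}_i}=-\eta\sum_i\alpha^*_i\,\ell^t_i-\log Z_t,
\end{equation*}
where $Z_t=\sum_i\alpha^{t-1}_i\exp(-\eta\,\ell^t_i)$ is the normalizer of the multiplicative update and $\ell^t_i=2\lambda\,\ell^t\,\psi_i(x_t)$ with $\ell^t=\hat{f}_{\alpha_t}(x_t)-f(x_t)$. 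All the work is then in lower-bounding the right-hand side.

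For the linear term I would use that the monomial weights reproduce the two polynomials at the query point: $\sum_i\alpha^*_i\psi_i(x_t)=f(x_t)$ and $\sum_i\alpha^{t}_i\psi_i(x_t)=\hat{f}_{\alpha_t}(x_t)$. Hence $\sum_i\alpha^*_i\ell^t_i=2\lambda\,\ell^t f(x_t)$, and the first-order part of $-\log Z_t$ contributes $2\lambda\,\ell^t\hat{f}_{\alpha_t}(x_t)$; together these collapse to $2\eta\lambda\,\ell^t\big(\hat{f}_{\alpha_t}(x_t)-f(x_t)\big)=2\eta\lambda\,(\ell^t)^2$, which is exactly the claimed gain term $\eta\,2\lambda\,(\hat{f}_{\alpha_t}(x_t)-f(x_t))^2$. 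For the remaining (quadratic) part of $-\log Z_t$ I would invoke a second-order estimate, either Hoeffding's lemma for the bounded variable $\ell^t_i$ under the distribution $\alpha^{t-1}$, or the elementary bound $e^{-y}\le 1-y+y^2$ followed by $-\log(1+v)\ge -v$. The crucial simplification here is that $\psi_i(x_t)^2=1$, so the second moment $\sum_i\alpha^{t-1}_i(\ell^t_i)^2$ equals the single value $(2\lambda\ell^t)^2$ rather than a genuine variance; this is what produces a pure $\eta^2$ remainder (once the scaled loss is bounded, as in the technical conditions) instead of a term depending on the spread of the $\psi_i(x_t)$.

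Finally I would collect the two pieces into $\phi_{t-1}-\phi_t\ge 2\eta\lambda(\ell^t)^2-\eta^2$ and use the hypothesis $\eta<\tfrac{1}{8\lambda}$ to justify the second-order expansion: it guarantees that the exponent $\eta\,|\ell^t_i|=2\eta\lambda\,|\ell^t|$ stays in the small regime where the quadratic bounds above are valid, and it controls the higher-order remainder so that it is absorbed into the single $\eta^2$. I expect the log-normalizer step to be the main obstacle: getting a clean $-\eta^2$ (rather than a remainder scaling with $\lambda$ and $(\ell^t)^2$) forces one to combine the boundedness of the scaled loss with the learning-rate constraint, and one must also be careful that the $\lambda$-rescaling in line 12 does not disturb the relative-entropy telescoping, i.e. that $\alpha^{t-1}$ and $\alpha^*$ carry the same total mass so that $\phi$ is a legitimate relative entropy at every step.
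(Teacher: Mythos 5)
Your proposal is correct and follows essentially the same route as the paper: the paper obtains the drift inequality $\phi_{t-1}-\phi_t \ge \eta\,\langle \ell^t, \alpha^{t-1}-\alpha^*\rangle - \eta^2$ by citing Lemma~4.1 of Hardt--Rothblum, whereas you re-derive that inequality inline via the log-partition-function expansion, and the substantive step --- using $\sum_i\alpha^*_i\psi_i(x_t)=f(x_t)$ and $\sum_i\alpha^{t-1}_i\psi_i(x_t)=\hat f_{\alpha_t}(x_t)$ to turn the inner product into $2\lambda\big(\hat f_{\alpha_t}(x_t)-f(x_t)\big)^2\ge 0$ --- is identical. Your closing caveats (boundedness of the scaled losses so that the remainder is a clean $\eta^2$, and mass-matching under the $\lambda$-rescaling so that $\phi$ is a genuine relative entropy) are exactly the implicit hypotheses the paper inherits from the cited lemma, so nothing is missing relative to the paper's own argument.
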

\begin{proof}
Using Lemma 4.1 of \cite{Moritz2010}, for each exponential weight update at step $t$ where 
$\alpha_i^t=\alpha_{i,+}^t$ and $\alpha_{i,-}^t=0$, we have $\phi_{t-1} - \phi_{t} \geq \eta \, \langle r^t, \alpha^{t-1} - \alpha^* \rangle - \eta^2$ (for $0<\eta < \frac{1}{8 \lambda}$), where $r_t$ is the vector of individual losses, i.e. $r^t_i = \ell^t_i$. As a result, we only need to show that $\langle r^t, \alpha^{t-1} - \alpha^* \rangle$ is always greater than or equal to zero, since the value of $\eta$ can be chosen to be suitably small:
\begin{align*}
    \langle r^t, \alpha^{t-1} - \alpha^* \rangle &= \sum_i \ell_i^t \, ( \alpha^{t-1}_i - \alpha^*_i ) \\ 
    &= \sum_i 2  \lambda \ell^t \, \psi_i(x_t) \, ( \alpha^{t-1}_i - \alpha^*_i ) \\ 
    &= 2 \lambda  \, \ell^t \, \sum_i ( \alpha^{t-1}_i - \alpha^*_i ) \psi_i(x_t) \\
    &= 2 \lambda  \, \bigg ( \sum_i ( \alpha^{t-1}_i - \alpha^*_i ) \psi_i(x_t) \bigg ) ^ 2  \\
    & = 2 \lambda (\hat{f}_{\alpha_t}(x_t)-f(x_t))^2 \geq 0.
\end{align*}
This proves the Lemma.
\end{proof}

For the generalization of this result to the case of Fourier coefficients with arbitrary signs, see Remark \ref{remark_general} in Appendix \ref{app:lemma_extension}.

\begin{remark} The above Lemma shows that for a small enough $\eta$, $\phi_{t-1} -\phi_t \geq 0$ for any evaluation point $x_t$.
This shows that for a sufficiently small learning rate $\eta$, irrespective of the evaluation point $x_t$, there is a potential drop in the distance between the true and estimated coefficients after the exponential weight update at time $t$. This observation motivates our surrogate model and the deployment of the exponential weight update rule.
\end{remark}


\subsection{Acquisition Function}
Our acquisition function is designed to minimize $\hat{f}$, the current estimate, in a way that allows some exploration. To this end,
we employ a version of simulated annealing (SA) \cite{Kirkpatrick1983, SA} as our acquisition function that uses the offline evaluations of the surrogate model. SA consists of a discrete-time inhomogeneous Markov chain, and is used to address discrete optimization problems. The key feature of simulated annealing is that it provides a means to escape local optima by allowing probabilistic hill-climbing moves in hopes of finding a global optimum. Although SA is not sample efficient in practice, and as we will see in Section \ref{sec:experiments} is not suitable for optimization of black-box functions, it can be set up in conjunction with a surrogate model. 

Define the neighborhood model $\mathcal{N}$ for the unconstrained problem as:
\begin{equation}
    \mathcal{N}(x_t) \gets \{x_i: d_H(x_i, x_t) = 1 \; \textrm{and} \; x_i \in \{0, 1\}^d \},
\end{equation}
where $d_H(x_i, x_t)$ designates the Hamming distance between $x_i$ and $x_t$.
Also, we define the following neighborhood model for the sum-constrained problem:
\begin{equation}
    \label{constrained_neighborhood}
    \mathcal{N}(x_t) \gets \{x_i: d_H(x_i, x_t) = 2 \; \textrm{and} \; x_i \in \mathcal{C}_n \}.
\end{equation}

Algorithm \ref{algo:SA} presents the simulated annealing for the latter two combinatorial structures, where $s(t)$ is an annealing schedule, which is a non-increasing function of $t$. We use the annealing schedule suggested in \cite{SA}, which follows an exponential decay given by $s(t) = \exp(-\omega t/d)$, where $\omega$ is a decay parameter. \\

\textbf{Analysis for Exponential Acquisition Function}:
We used a simulated annealing based acquisition function on the surrogate model $\hat{f}(x)$. This model is very difficult to analyze. Instead, we analyze an exponential acquisition function given by 
$$x \sim \frac{\exp\left(-\nicefrac{\hat{f}_{\alpha_t}(x)}{T}\right)}{\sum_{x\in \{-1,1\}^d} \exp \left(-\nicefrac{\hat{f}_{\alpha_t}(x)}{T}\right)}$$
where $T$ is the temperature. Let the p.m.f of this acquired sample be $\hat{P}_{\alpha_t}(x)$.
If we had access to the actual function $f$, we would use the acquisition function: 
$$x \sim \frac{\exp\left(-\nicefrac{f(x)}{T}\right)}{\sum_{x\in \{-1,1\}^d} \exp \left(-\nicefrac{f(x)}{T}\right)}.$$
Let the p.m.f of this acquired sample be $P(x)$. We emphasize that, given explicit access to $f$ (white-box), one would simply repeatedly acquire samples using the acquisition function for $f$. In our black-box algorithm, we use the surrogate model to acquire samples. 

Now, we show a result that implies the following under some additional technical condition: \textit{Until the acquisition function based on $\hat{f}_{\alpha_t}$ yields samples which are close in KL divergence to samples yielded by the acquisition function based on $f$, average $\phi_{t-1} - \phi_{t}$ (as in Lemma \ref{lemma_positive}) is large.}

In other words, if the acquisition function for our algorithm is far from the white-box acquisition function, then non-trivial learning of $f$ happens, i.e. $\alpha_t$ moves closer to $\alpha^{*}$ at least by a certain amount.

Let $\hat{Z}= \sum_{x} \exp \left( - \hat{f}_{\alpha_t}(x)/T \right)$ be the partition function. Similarly, let $Z$ be the partition function associated with $P(x)$.

\begin{theorem}
\label{thm:Acquisition}
 Let $-1 \leq \hat{f}_{\alpha_t}(x), f(x) \leq 1$. If at any time $t$, and for a temperature $T$, we have for some $\epsilon>0$: \[ \left \lvert D_{\mathrm{KL}}(\hat{P}_{\alpha_t} \lVert P) -  \log \left( \frac{ Z}{\hat{Z}}\right)  \right \rvert \geq \epsilon,\]
 then $\mathbb{E}_{\hat{P}_{\alpha_t}} [\phi_{t-1}-\phi_t] \geq  2 \, \eta \, \lambda \, \epsilon^2 T^2 - \eta^2$. Here, $D_{\mathrm{KL}}$ is defined with respect to $\log_e$ for convenience.
\end{theorem}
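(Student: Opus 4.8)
The plan is to reduce the theorem to the per-step potential drop already established in Lemma~\ref{lemma_positive}, after first giving the left-hand quantity a transparent meaning. The key observation I would exploit is that the combination $D_{\mathrm{KL}}(\hat{P}_{\alpha_t}\|P) - \log(Z/\hat{Z})$ is not genuinely a divergence: the two normalizing constants cancel, and what remains is simply the expected (temperature-scaled) residual between the surrogate and the true function under the acquired distribution $\hat{P}_{\alpha_t}$. Once this identity is in hand, the hypothesis becomes a clean lower bound on the absolute mean residual, and Lemma~\ref{lemma_positive} together with Jensen's inequality finishes the argument.

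Concretely, I would first expand the log-likelihood ratio pointwise. Since $\hat{P}_{\alpha_t}(x) = e^{-\hat{f}_{\alpha_t}(x)/T}/\hat{Z}$ and $P(x) = e^{-f(x)/T}/Z$, we have
\[
\log\frac{\hat{P}_{\alpha_t}(x)}{P(x)} = \frac{f(x)-\hat{f}_{\alpha_t}(x)}{T} + \log\frac{Z}{\hat{Z}}.
\]
Taking the expectation under $\hat{P}_{\alpha_t}$, and noting that the log-ratio of partition functions is constant in $x$ and so passes through the expectation, yields
\[
D_{\mathrm{KL}}(\hat{P}_{\alpha_t}\|P) - \log\frac{Z}{\hat{Z}} = \frac{1}{T}\,\mathbb{E}_{\hat{P}_{\alpha_t}}\big[f(x)-\hat{f}_{\alpha_t}(x)\big].
\]
Hence the hypothesis $\lvert D_{\mathrm{KL}}(\hat{P}_{\alpha_t}\|P) - \log(Z/\hat{Z})\rvert \geq \epsilon$ is exactly equivalent to $\big|\mathbb{E}_{\hat{P}_{\alpha_t}}[\hat{f}_{\alpha_t}(x)-f(x)]\big| \geq \epsilon T$.

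Finally I would import the per-step bound $\phi_{t-1}-\phi_t \geq 2\eta\lambda(\hat{f}_{\alpha_t}(x_t)-f(x_t))^2 - \eta^2$ from Lemma~\ref{lemma_positive}, which holds for \emph{every} evaluation point $x_t$ (as observed in the remark following the lemma, and in the sign-agnostic form of Remark~\ref{remark_general}). Taking expectation over $x_t\sim\hat{P}_{\alpha_t}$ and applying Jensen's inequality to the convex map $u\mapsto u^2$,
\begin{align*}
\mathbb{E}_{\hat{P}_{\alpha_t}}[\phi_{t-1}-\phi_t] &\geq 2\eta\lambda\,\mathbb{E}_{\hat{P}_{\alpha_t}}\big[(\hat{f}_{\alpha_t}-f)^2\big] - \eta^2 \\
&\geq 2\eta\lambda\big(\mathbb{E}_{\hat{P}_{\alpha_t}}[\hat{f}_{\alpha_t}-f]\big)^2 - \eta^2 \\
&\geq 2\eta\lambda\epsilon^2 T^2 - \eta^2,
\end{align*}
which is the claim. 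I expect the only genuine subtlety to be the first step: recognizing that the somewhat awkward hypothesis is engineered precisely so that the partition-function terms cancel and leave behind the mean residual. Everything afterward is a routine application of Jensen's inequality and a direct invocation of the lemma. I would also note that the resulting bound inherits the learning-rate restriction $\eta < \tfrac{1}{8\lambda}$ from Lemma~\ref{lemma_positive}, and is informative only when $\eta$ is small enough that $2\lambda\epsilon^2 T^2 > \eta$, i.e.\ when the right-hand side is positive.
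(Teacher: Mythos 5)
Your proof is correct and follows essentially the same route as the paper's: both expand the log-likelihood ratio so that the partition-function terms cancel, identify the hypothesis with a lower bound of $\epsilon T$ on $\lvert \mathbb{E}_{\hat{P}_{\alpha_t}}[\hat{f}_{\alpha_t}(x)-f(x)]\rvert$, and then combine Lemma~\ref{lemma_positive} with Jensen's inequality ($(\mathbb{E}[X])^2 \leq \mathbb{E}[X^2]$) to obtain the stated bound. The only cosmetic difference is that the paper chains two applications of Jensen (first to $\lvert x\rvert$, then to $x^2$) whereas you apply it once to $u\mapsto u^2$; the content is identical.
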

\begin{proof}
The proof is in the supplement.
\end{proof}

\begin{remark} Note that the condition in the theorem definitely implies the condition that $D_{\mathrm{KL}}(\hat{P}_{\alpha_t} \lVert P) >0$.
\end{remark}


\begin{algorithm}[!t]
\caption{Simulated Annealing for Combinatorial Constraints}
\begin{algorithmic}[1]
\STATE \textbf{Inputs:} surrogate model $\widehat{f}_{\alpha_t}$, neighborhood model $\mathcal{N}$, Constraint Set $\mathcal{C}$
\STATE $t = 0$
\STATE Initialize $\; x_0 \in \mathcal{C}$
\REPEAT
\STATE $z \sim \texttt{unif}\big(\mathcal{N}(x_{t}) \big)$
\IF {$\widehat{f}_{\alpha_t}(z) \leq \widehat{f}_{\alpha_t}(x_{t})$}
\STATE $x_{t+1} \gets z$
\ELSIF{$\texttt{unif}(0, 1) \leq \exp\bigg(-\tfrac{\widehat{f}_{\alpha_t}(z) - \widehat{f}_{\alpha_t}(x_{t})}{s(t)}\bigg)$}
\STATE $x_{t+1} \gets z$
\ELSE
\STATE $x_{t+1} \gets x_t$
\ENDIF
\STATE $t \gets t + 1$
\UNTIL{Stopping Criteria}
\RETURN $x_t$
\end{algorithmic}
\label{algo:SA}
\end{algorithm}

\subsection{Computational Complexity}

The computational complexity per time step for model learning in the proposed algorithm is in $\mathcal{O}(p) = \mathcal{O}(d^m)$, which is linear in the number of Fourier coefficients. More importantly, the complexity of the proposed learning algorithm is independent of the number of function evaluations (i.e. time step $t$). We also note that the complexity of the simulated annealing is in $\mathcal{O}(d^2)$; therefore, the overall complexity of the algorithm remains $\mathcal{O}(p)$ for $m \geq 2$.



\section{Experiments and Results}
\label{sec:experiments}

In this section, we evaluate the performance of the proposed algorithm in terms of simple regret as well as average computational time required to draw a sample for each function evaluation. We consider four problems: two unconstrained combinatorial problems (Sparsification of Ising Models and Contamination Control) as well as two combinatorial optimization problems with a sum constraint (Noisy $n$-Queens and optimal defect pattern in 2D nanomaterials). The latter problem is adopted from a real-world scenario; it takes advantage of a genuine energy evaluator (as black-box function) recognized in  the molecular modeling literature, and has crucial real-life applications in designing nanomaterials utilized in nanoscale electronic devices \cite{defect_dynamics}. The two unconstrained combinatorial problems have also been considered in \cite{BOCS} and \cite{COMBO}.

We investigate the performance of different algorithms in two settings: (i) finite evaluation-budget regime, and (ii) finite time-budget regime. In the former case, we assume that each algorithm is given a fixed evaluation budget and has access to an unlimited time budget. In this setting, we consider problems with a relatively small number of variables. In the latter case, we assume that each algorithm, in addition to an evaluation budget, has a limited time budget and reports the minimum obtained within that time frame. This scenario is particularly relevant in problems with moderate numbers of variables, since the computational cost for state-of-the art algorithms are prohibitive, which makes it impossible in practice to draw a large number of samples for function evaluation.

The results are compared against two baselines, random search (RS) \cite{RS} and simulated annealing (SA) \cite{Kirkpatrick1983, SA}, as well as two state-of-the-art algorithms, BOCS \cite{BOCS} and COMBO \cite{COMBO}. We use the BOCS-SA version of BOCS, as opposed to BOCS-SDP, since the former version is computationally less expensive; as such, its use would make more sense than BOCS-SDP in the finite time-budget setting. In addition, the BOCS-SA algorithm can be adapted to optimization problems with a sum constraint in a straightforward fashion.


All the results are averaged over $10$ runs. We measure the performance of different algorithms in terms of the mean over simple regrets $\pm$ one standard error of the mean. We run the experiments on machines from the Xeon E5-2600 v3 family. The function evaluations in all the experiments are linearly mapped from the original interval $[\textrm{min}(f), \textrm{max}(f)]$ to the target interval $[-1, 1]$. Hence, the function value of $-1$ corresponds with the desired minimum. In many cases, we know a lower bound on $\min(f)$ and an upper bound on $\max(f)$ that enables us to scale to $[-1,1]$. In other cases where the lower bound on $\min(f)$ is unknown analytically, we fix a universal level which is smaller than all observed evaluations and compare all algorithms over all runs to this fixed level.

The sparsity parameter $\lambda$ of our proposed algorithm is set to $1$ in all the experiments. In our experiments, the algorithm was relatively insensitive to the choice of this parameter. Note that BOCS and COMBO also include sparsity/regularization and exploration parameters for their respective algorithms, the choice of which did not seem to impact the outcome noticeably, in a similar fashion to our algorithm. 

Finally, we note that COMBO is order agnostic, while our proposed algorithm as well as BOCS take the maximum monomial degree $m$ as an input parameter. The maximum order $m = 2$ or $m = 3$ is deployed for our algorithm in the experiments. In particular, as shown in \cite{BOCS} and verified in our experiments, the sparsification of the Ising models as well as the contamination control problem have natural interactions of orders higher than two among the variables. As such, we set $m = 3$ in the latter two problems. In the remaining experiments, a maximum order of $m = 2$ is utilized. We set the maximum order of BOCS to $2$ in all the experiments, due to its excessively high computational cost at $m=3$.

\subsection{Sparsification of Ising Models}

Let $p$ be a zero-field Ising model with the probability distribution of $p(z) = \tfrac{1}{Z_p} \exp(z^T J^p z)$, where $z \in \{-1, 1\}^n$, $Z_p = \sum_z \exp(z^T J^p z)$ is the partition function, and $J^p \in \mathbb{R}^{n \times n}$ is a symmetric interaction matrix. The aim of the Ising sparsification problem is to find a sparse approximation model $q_x(z) = \tfrac{1}{Z_q} \exp(z^T J^{q_x} z)$ such that $\forall i, j \in [n]: J^{q_x}_{i, j} = x_{i, j} J^{p}_{i, j}$, where $x_{i, j} \in \{0, 1\}$ are decision variables. The solution to this problem is given by
\begin{equation}
	x = \arg\min_{x \in \{0, 1\}^d} D_{\mathrm{KL}}(p||q_x) + \lambda \|x\|_1
\end{equation}
where $D_{\mathrm{KL}}(p||q_x)$ is the KL divergence of $p$ and $q_x$, and $\lambda$ is a regularization parameter. The KL divergence is expensive to compute; in addition, an exhaustive search requires $2^d$ function evaluations, which is infeasible in general.

We follow the same experimental setup as in \cite{BOCS} and \cite{COMBO}, where we have an Ising model with $n = 16$ nodes and $d = 24$ interactions. The values of the interactions are sampled uniformly at random from the interval $[0.05, 5]$. The simple regret of various algorithms for the regularization parameter $\lambda = 0.01$ is depicted in Figure \ref{fig:ising}. The proposed algorithm is able to hit the minimum found by both COMBO and BOCS, although it requires more function evaluations to achieve that feat. However, we point out that, as depicted in Table \ref{times}, the proposed algorithm only takes $0.1$ seconds per time step on average as opposed to $47.7$ and $78.4$ seconds per time step taken for BOCS and COMBO, respectively. In particular, both BOCS and COMBO are computationally far more expensive than the black-box function, whose average evaluation cost for each query is $2.24$ seconds. Despite its poor initial performance, SA is also able to virtually reach the same minimum value as the latter three algorithms.

We note that the complexity of the Ising sparsification problem grows exponentially with $d$; hence, it is computationally infeasible to obtain black-box evaluations for moderately large numbers of variables; hence, we only considered the finite evaluation-budget setting for this particular problem.

\begin{figure}[t]
\center
\includegraphics[width=\linewidth]{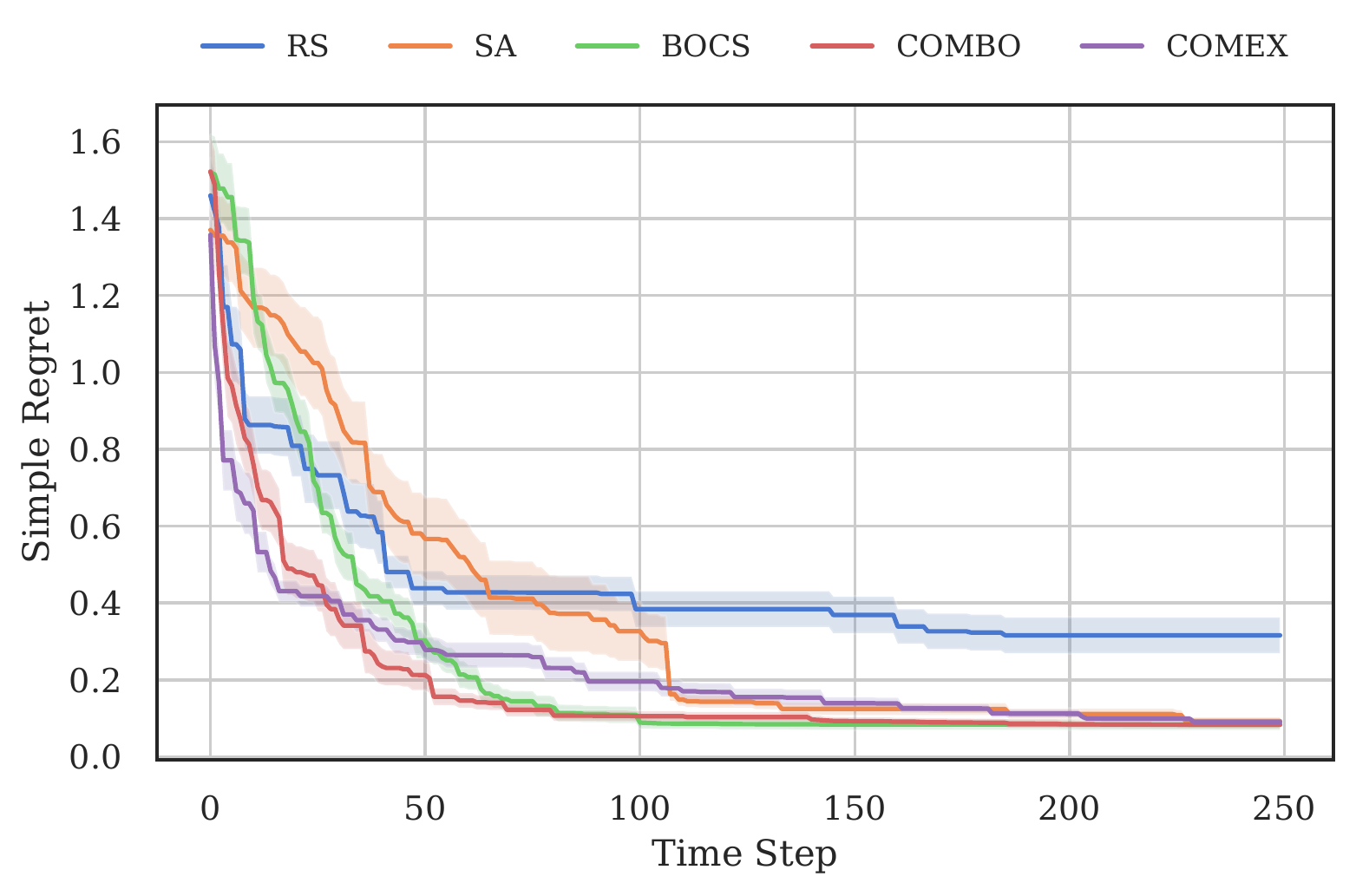} 
\caption{\small \label{fig:ising}
Simple regret for the Ising Sparsification problem.}
\end{figure}

\subsection{Contamination Control}

The contamination control in food supply chain \cite{contamination_control} is an optimization problem with $d$ binary variables representing stages that can be contaminated with pathogenic microorganisms. At each time step, one can intervene at each stage of the supply chain to reduce the contamination risk by a random rate (which follows a beta distribution) and incur an associated cost. The goal is to minimize the overall food contamination while minimizing the total prevention cost. As such, the minimization is carried out with respect to the set of decision variables $x \in \{0, 1\}^d$ incorporating an $\ell_1$ regularization term with a regularization parameter $\lambda$. 

Following the experimental setting of \cite{COMBO} and \cite{BOCS}, we initially set $d = 21$ and $\lambda = 0.01$. The results in terms of simple regret are shown in Figure \ref{fig:contamination}. As we can see from this Figure, COMBO outperforms the rest of the algorithms in that it is able to find the minimum in just over $100$ function evaluations on average. Despite its initially large regret results, BOCS is also able to find the minimum in just over $150$ function evaluations. The proposed \texttt{COMEX}  algorithm is also competitive and is able to find the minimum in just over $200$ function evaluations. Note that SA and especially RS were not able to achieve the minimum in $250$ function evaluations. Finally, we point out that the proposed algorithm takes a fraction of time required by BOCS and COMBO in order to draw evaluation samples, as shown in Table \ref{times}.

We then increase the dimensionality of the problem to $d = 100$ variables. Due to the high dimensionality of this problem, drawing samples from both COMBO and BOCS becomes computationally expensive. Therefore, in addition to the evaluation budget of $1000$, we set a finite time budget of $24$ hours and run the experiments until at least one of the budget constraints is attained. Simple regret results are depicted in Figure \ref{fig:contamination}. In this setting, BOCS is only able to draw $\approx 150$ samples, while COMBO exceeds the time budget at around $100$ samples. On the other hand, the proposed algorithm is able to produce $1000$ samples quickly and approach the minimum function value. Considering the high dimensionality of this data, RS produces poor results, whereas SA incurs an eventual simple regret of $0.2$ on average. Finally, we note that  \texttt{COMEX} is over $100$ times faster than both BOCS and COMBO, as depicted\footnote{Note that the average computation times reported in Table \ref{times} correspond only with producing a point via the algorithm, whereas the $24$-hour budget includes both the black-box evaluation cost and the computation time of the algorithm.} in Table \ref{times}.

\begin{figure}[t]
\center
\includegraphics[width=\linewidth]{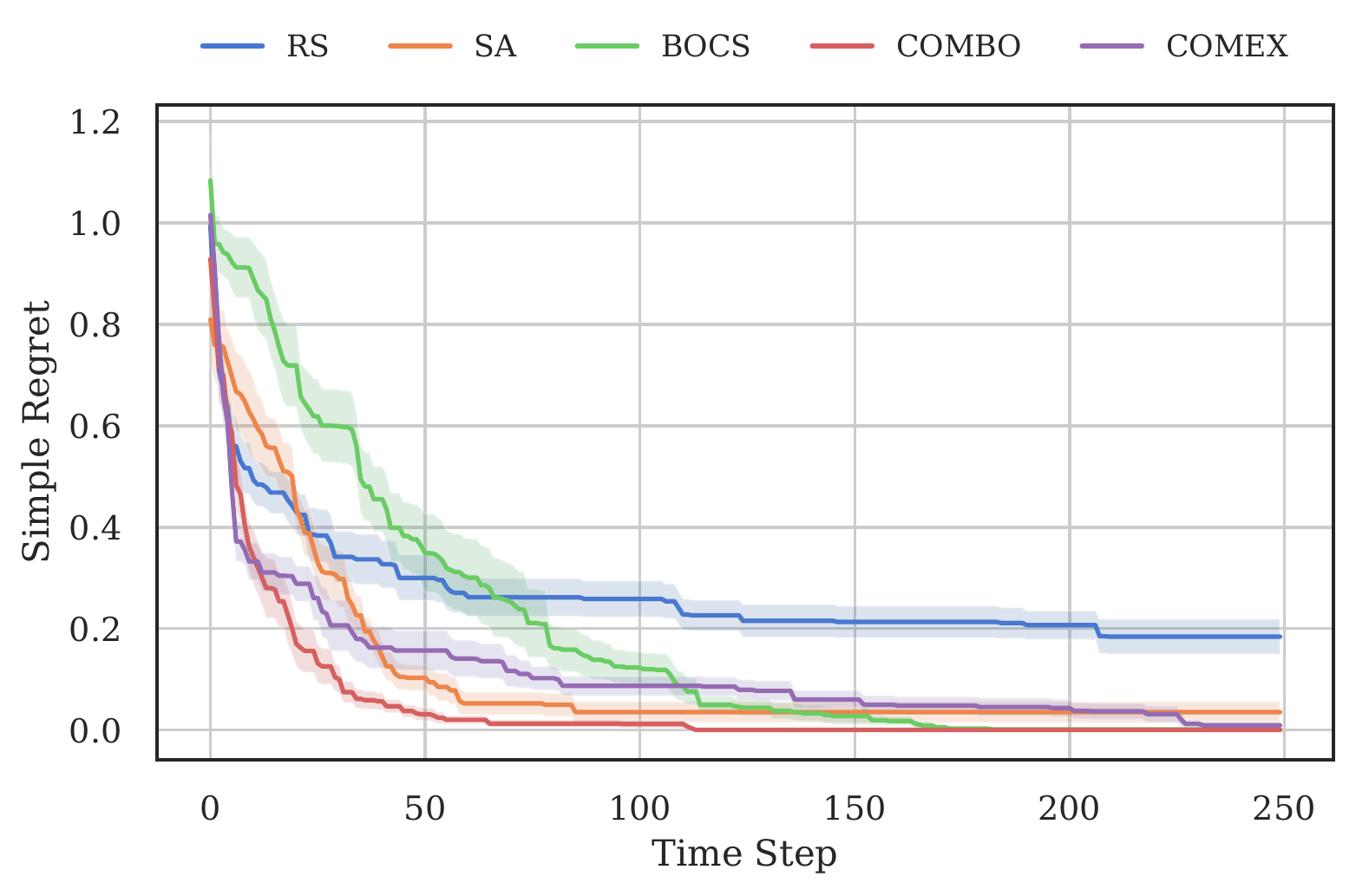} \\
\includegraphics[width=\linewidth]{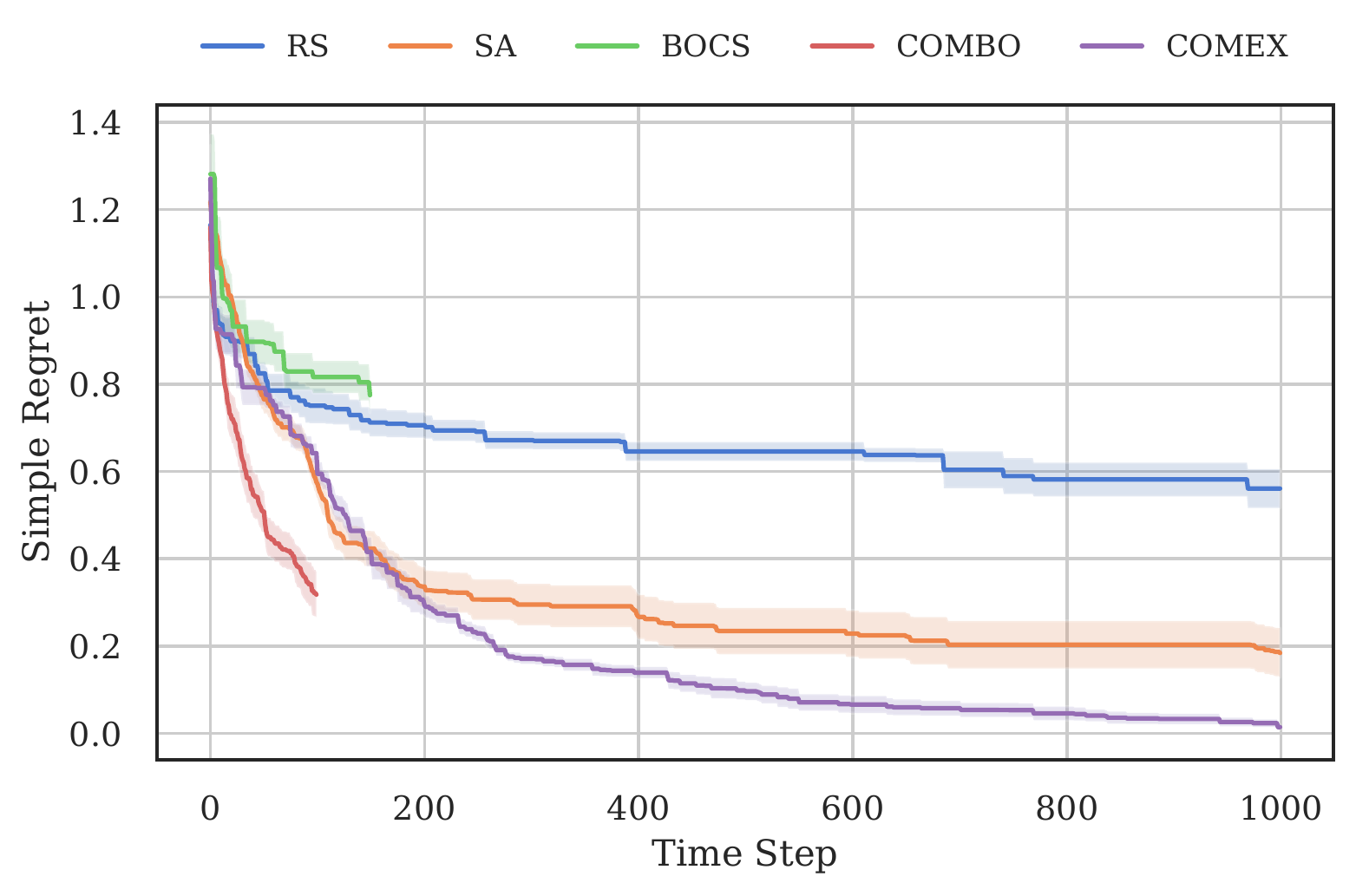} 
\caption{\small \label{fig:contamination}
Simple regret for the contamination control problem with: $d = 21$ (top), and $d = 100$ (bottom).}
\end{figure}

\begin{table}[t]
\caption{Average computation time per step (in Seconds) over different datasets for different algorithms}
\label{times}
\begin{center}
\small
\begin{sc}
\begin{tabular}{lccccr}
\toprule
\multicolumn{3}{c}{Black Box} & \multicolumn{3}{c}{Algorithm} \\
\cmidrule(lr){1-3}\cmidrule(lr){4-6}
Dataset & $d$ & Cost & BOCS & COMBO & \texttt{COMEX} \\
\cmidrule(lr){1-3}\cmidrule(lr){4-6}
N-Queens      & 49  & 0.001 & 202.1 & 336.7 & $\bm{0.09}$ \\
Contamination & 21  & 0.001 & 28.6 & 53.8 & $\bm{0.07}$ \\
Ising         & 24  & 2.24 & 47.7 & 78.4 & $\bm{0.10}$  \\
\cmidrule(lr){1-3}\cmidrule(lr){4-6}
N-Queens    & 144 & 0.001 & 401.28 & 722.05 & $\bm{2.87}$ \\
Contamination & 100 & 0.002 & 454.93 & 587.65 & $\bm{1.33}$ \\
Defect Dynamics  & 400 & 73.16 & 873.99 & 3869.92  & $\bm{65.5}$ \\
\bottomrule
\end{tabular}
\end{sc}
\end{center}
\end{table}

\subsection{Noisy \texorpdfstring{$n$}{-}-Queens}

We next consider a constrained optimization problem where the search space is restricted to the combinatorial domain $\mathcal{C}_n$. We adapt the acquisition function of different algorithms to this constrained domain in a straightforward manner. More specifically, we modify the local neighborhood search in both SA (in BOCS as well as in our proposed algorithm) and graph local search (in COMBO) to the constrained domain $\mathcal{C}_n$ by restricting the neighborhood to data points with Hamming distance of two rather than one, as defined in \eqref{constrained_neighborhood}.

The $n$-queens problem is a commonly used benchmark in combinatorial optimization literature \cite{Mukherjee2015, Hu2003swarm, takenaka2000proposal, homaifar1992n}. This problem consists of finding the placement of $n$ queens on an $n  \times n$ chessboard so that no two queens share the same row, column, or diagonal \cite{Bell2009survey}. This problem can be formulated as a constrained binary optimization problem. We use binary variables $x_{ij}$ to represent the placement of a queen in each square position of the chessboard given by its row and column pair $(i,j)$, for $i, j \in [n]$. A placement of queens is then represented by a binary vector $x$ of length $d = n \times n$. Hence, a solution to the $n$-queens problem simultaneously meets the following constraints: 
\begin{itemize}
\item There is exactly one queen in each row $i \in [n]$:
	\begin{equation}\label{eq:queens_r} e_{\textrm{rows}}(x) = \sum_i ( \sum_j x_{ij} -1 )^2 = 0,  \end{equation}
\item There is exactly one queen in each column $j \in [n]$:
	\begin{equation}\label{eq:queens_c}  e_{\textrm{cols}}(x) = \sum_j ( \sum_i x_{ij} -1 )^2 = 0,  \end{equation}
\item There is at most one queen in each diagonal:
	\begin{equation}\label{eq:queens_d}  e_{\textrm{diags}}(x) =\sum_{\ell} \, \sum_{(i, j) \neq (k, h) \in \mathcal{D}_{\ell}} x_{ij}  x_{kh} = 0, \end{equation}
where $\mathcal{D}_{\ell}$ represents the set of all the elements in the $\ell$-th diagonal, and the first summation is taken over all the diagonals with at least one square position.
\end{itemize}
    
\noindent The non-negative quadratic terms in constraints \eqref{eq:queens_r}-\eqref{eq:queens_d} indicate deviations from the required number of queens in each row, column, and diagonal, respectively. Thus, if there exists a solution to the $n$-queens problem given by a binary vector $x$ satisfying all the constraints, the minimum of \begin{equation}\label{eq:queens_energy} f(x) = e_{\textrm{rows}}(x) + e_{\textrm{cols}}(x) + e_{\textrm{diags}}(x) \end{equation} must be achieved at zero, and vice versa. We know that for $n>3$ a solution to the $n$-queens problem indeed exists; therefore, minimizing $f(x)$ is equivalent to solving the constraints \eqref{eq:queens_r}--\eqref{eq:queens_d}. This allows the formulation of the problem as an unconstrained optimization one.

To provide a benchmark for the constrained optimization case, we add the redundant constraint that $\sum_i \sum_j x_{ij}= n$ to our formulation when generating samples, effectively reducing the search space to $\mathcal{C}_n$. Thus, for each problem of size $d$, we have $d = n \times n$ binary variables to optimize over, where the search space is constrained to binary vectors with $n$ ones.
We consider a noisy version of this problem, where the function evaluations from Equation \eqref{eq:queens_energy}, having been linearly mapped to the interval $[-1, 1]$, incur an additive Gaussian noise with zero mean and standard deviation of $\sigma = 0.02$.

First, we consider a smaller version of this problem with $n = 7$ and a finite evaluation budget of $250$ samples. In this experiment, all the algorithms are able to exhaust the evaluation budget within the allocated $24$-hour time frame. The results in terms of simple regret are depicted in Figure \ref{fig:nqueens}. As we can see from this figure, COMBO outperforms all the algorithms. BOCS performs only slightly better than RS. \texttt{COMEX} is a close second, while being able to run the experiment at a fraction of the time consumed by either COMBO or BOCS as indicated in Table \ref{times}. 

Next, we increase the size of the problem to $n = 12$ and enforce a finite time budget of $24$ hours. In this case, COMBO and BOCS are unable to use the evaluation budget within the allotted time frame, and manage to draw only $\approx 100$ and $\approx 150$ samples, respectively. The proposed algorithm, on the other hand, is able to take advantage of the full evaluation budget and outperforms the baselines by a significant margin, as shown in Figure \ref{fig:nqueens}. 

\begin{figure}[t]
\center
\includegraphics[width=\linewidth]{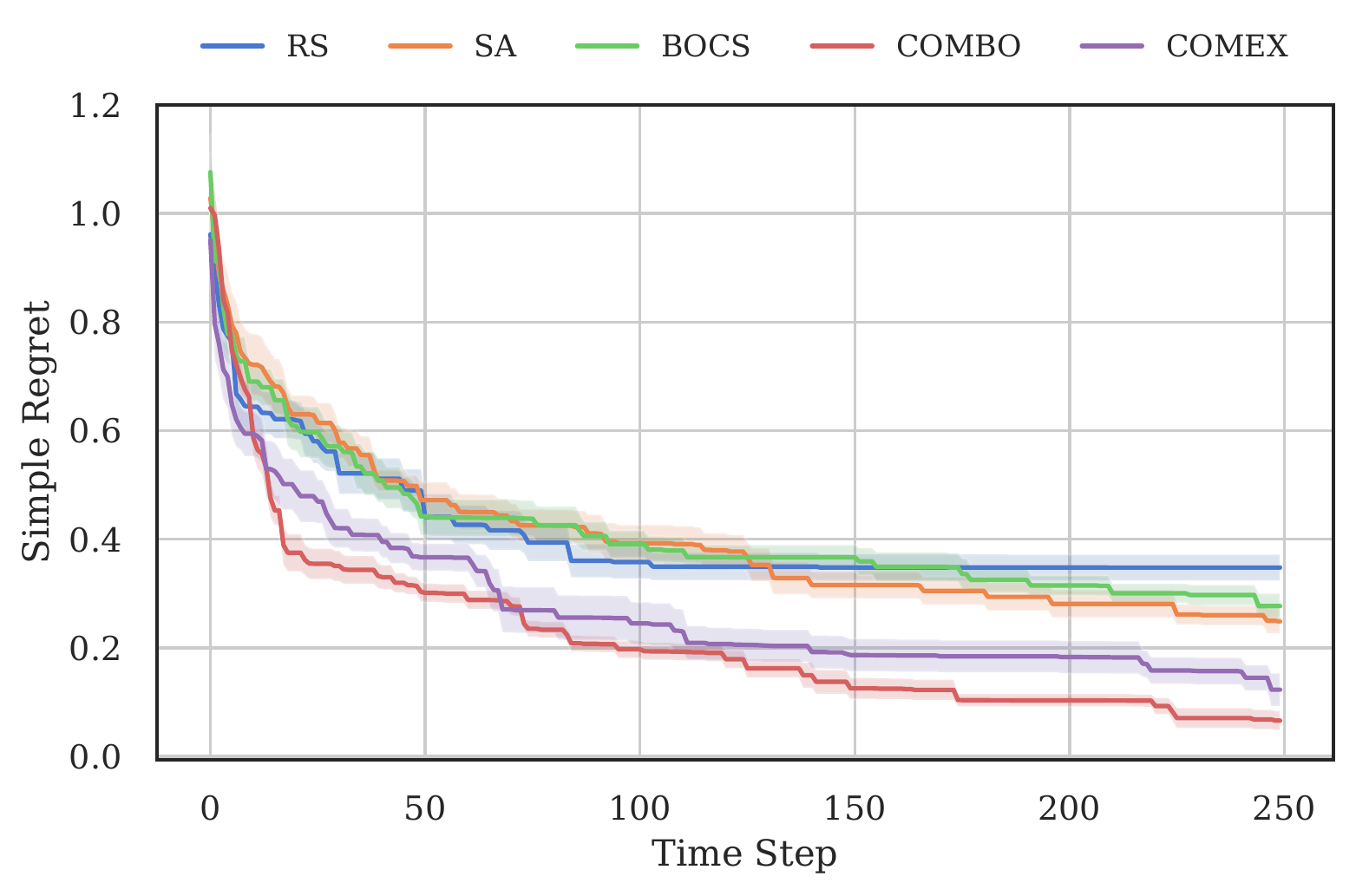} \\
\includegraphics[width=\linewidth]{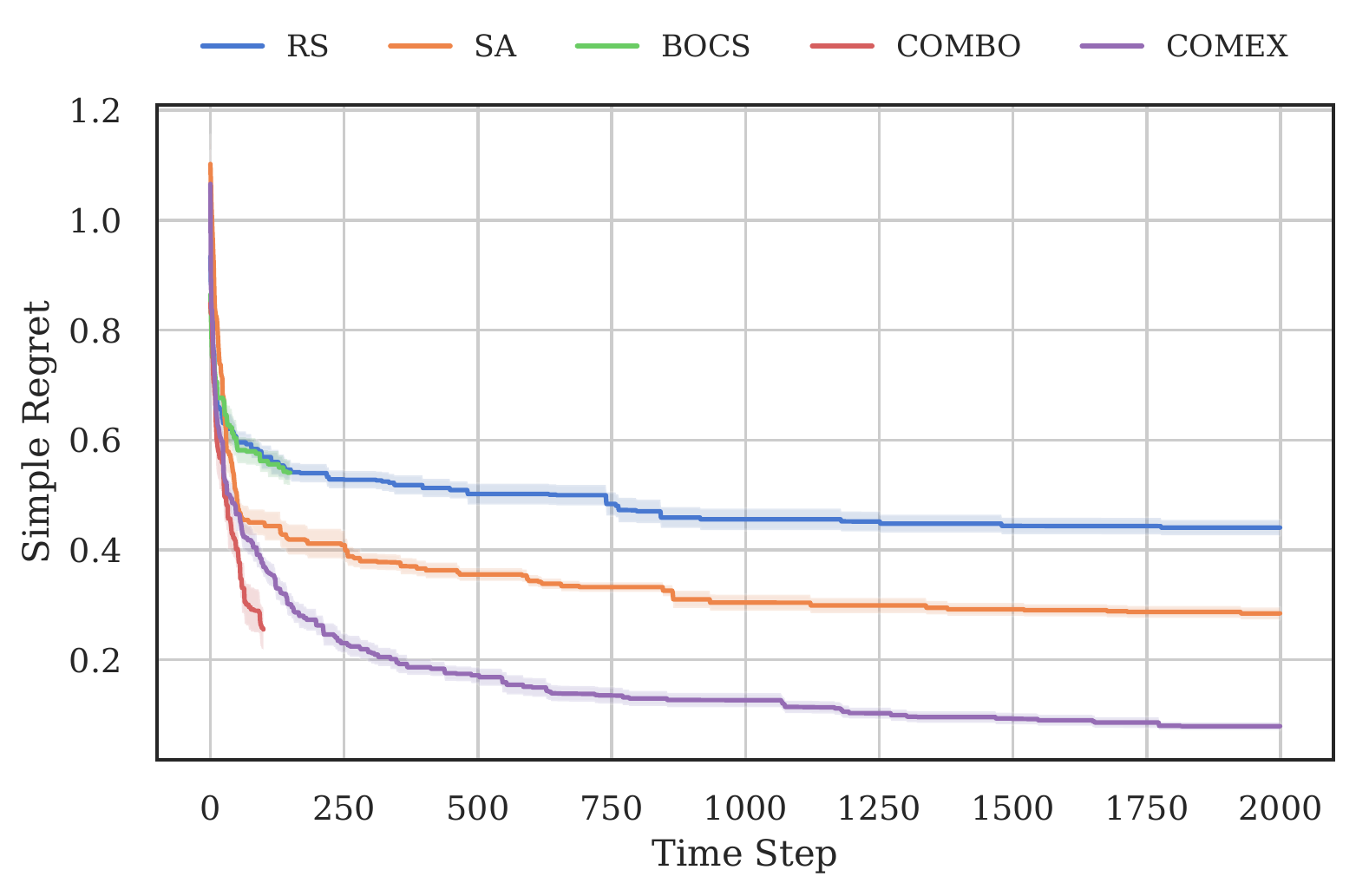} 
\caption{\small \label{fig:nqueens}
Simple regret for the noisy $n$-Queens problem with: $n = 7$ ($d = 49$) (top), and $n = 12$ ($d = 144$) (bottom).}
\end{figure}

\subsection{Optimal Arrangement of Defect Structures in Nanomaterials}
Physical, chemical, and optoelectronic properties of two-dimensional transition-metal dichalcogenides (TMDs), such as MoS$_2$, are governed by structural defects such as Sulfur vacancies. A fundamental understanding of the spatial distribution of defects in these low-dimensional systems is critical for advances
in nanotechnology. Therefore, understanding the dynamics of point defect arrangements at various vacancy concentrations is crucial, as those are known to play a key role in phase transformation governing nanoscale electronic devices \cite{defect_dynamics}. 

Given a two-dimensional grid of size $d$, the problem is to find the formation of a defect structure of size $n$ (corresponding to a given concentration factor) with the lowest energy, in which defects can be in either isolated or extended form (i.e. several defects next to each other) or a combination of both \cite{defect_dynamics}. Using  the reactive force field (ReaxFF) \cite{ostadhossein2017reaxff} within LAMMPS simulation package \cite{Plimpton1995lammps}, we are able to obtain an energy evaluation for each selection of the defect structure. However, such function evaluations are computationally expensive to acquire. Hence, we are interested in finding the low-energy defect structure with as few energy evaluations as possible. 

In our experiments, we deploy a 2-D MoS${_2}$ monolayer with a grid of size $d=400$, following the framework suggested in \cite{defect_dynamics}. In particular, we are interested in finding the optimal placement of $n = 16$ sulfur vacancies in the MoS$_2$ monolayer. Considering the moderately high dimensionality of this problem, the computational complexities of BOCS and COMBO render their applicability practically impossible, as it would take the latter algorithms several weeks to accumulate a few hundred candidate data points for energy evaluation purposes\footnote{With a $24$-hour time budget, BOCS managed to complete just over $50$ steps with a simple regret of $0.59$, whereas COMBO even failed to produce that many steps.}.

As can be observed in Figure \ref{fig:defect_dynamics}, the proposed \texttt{COMEX} algorithm outperforms the baselines, RS and SA, in identifying the optimal defect structure in the sample TMD grid of size $d=400$ over $500$ energy evaluations. In this experiment, since the exact value of the minimum energy is unknown, for the purpose of simple regret calculations, we pick a fixed universal energy level which is less than all the obtained function evaluations via all the algorithms in our experiments.


\begin{figure}[t]
\center
\includegraphics[width=\linewidth]{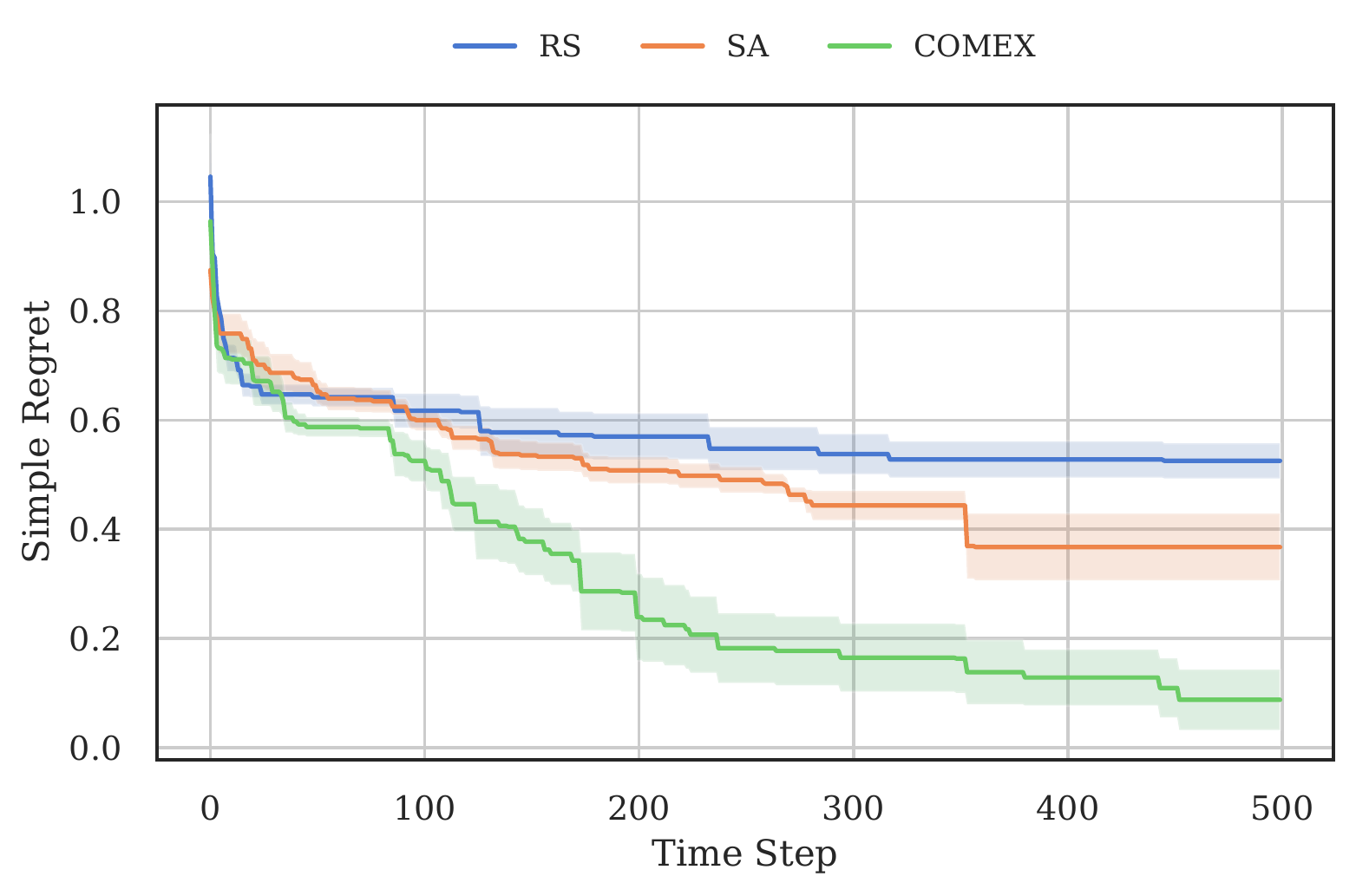}
\caption{\small \label{fig:defect_dynamics}
Simple regret for the optimal organization of point defect problem with $n = 16$ ($d = 400$).}
\end{figure}

\section{Future Work}

As mentioned in Section \ref{sec:algorithm}, the computational cost (per time step) of the proposed \texttt{COMEX} algorithm is independent of the number of function evaluations and is linear in the number of monomial terms. This is a major improvement over the existing state-of-the-art algorithms in that the dependence on the number of function evaluations is completely eliminated from the complexity. Nevertheless, the complexity of the algorithm with respect to the number of variables grows polynomially, and could become expensive for problems with particularly higher orders of interactions incorporating a large number of variables. Therefore, an important direction for future work would involve investigating the possibility of improving this time complexity. 
We speculate that the proposed algorithm can be extended to accommodate such computational requirements via addition of fresh experts over time in an adaptive fashion. 

In this work, we utilized a simple simulated annealing method as our acquisition function model and proved our results regarding acquisition through exponential sampling. Another avenue for future research is to develop more efficient strategies to model the acquisition function, particularly devised for real-valued boolean functions.




\begin{acks}

We would like to thank Changyong Oh for helpful discussions on the COMBO algorithm.
Use of the Center for Nanoscale Materials, an Office of Science user facility, was supported by the U.S. Department of Energy, Office of Science, Office of Basic Energy Sciences, under Contract No. DE-AC02-06CH11357.

\end{acks}


\bibliographystyle{ACM-Reference-Format}
\balance

\clearpage

\appendix

\section{Extension of Lemma \ref{lemma_positive}}
\label{app:lemma_extension}

The results of Lemma \ref{lemma_positive} can be extended to the general case of Fourier coefficients with arbitrary signs, as follows.

\begin{remark}
\label{remark_general}
Lemma \ref{lemma_positive} holds for the general case of Fourier coefficients $\alpha_i^t$ with arbitrary signs.
\end{remark}
\begin{proof}
Following on the idea from \cite{hoeven18}, if $\lVert \alpha^{*} \rVert_1 \leq 1$, then one can always write $\alpha^{*}_i = \alpha^{*}_{i,+}-\alpha^{*}_{i,-}$ where $\sum_{\gamma,i}\alpha^{*}_{i,\gamma}=1$ and $\alpha^{*}_{i,\gamma} \geq 0$. This is because any point inside an $\ell_1$ ball is in the convex hull of $\{e_i,-e_i\}_{i \in [d]}$ where $e_i$ are the canonical unit vectors. Therefore, to approximate it at any time $t$ during the algorithm with exponential weight update, we assume that we have a set of $2p$ Fourier coefficients; we consider the monomial terms $+\psi_i(x)$ for the Fourier coefficients $\alpha_{i, +}^t$ as well as the monomial terms $-\psi_i(x)$ for the Fourier coefficients $\alpha_{i, -}^t$. Note that all the coefficients $\alpha_{i, \gamma}^t, \forall \gamma \in \{-, +\}$ are non-negative, and that the set of all such coefficients form a distribution, i.e. $\sum_{i, \gamma} \alpha_{i, \gamma}^t = 1$, due to the normalization in Algorithm \ref{algo:COMEX}.  Therefore, applying Lemma \ref{lemma_positive} to the extended set of Fourier coefficients completes the proof.
\end{proof}

\section{Proof of Theorem \ref{thm:Acquisition}}

In the main paper, we used a simulated annealing based acquisition function on the surrogate model $\hat{f}(x)$. This model is very difficult to analyze. Instead, we analyze a more idealized and simpler acquisition function which is 
$$x \sim \frac{\exp\left(-\nicefrac{\hat{f}_{\alpha_t}(x)}{T}\right)}{\sum_{x\in \{-1,1\}^d} \exp \left(-\nicefrac{\hat{f}_{\alpha_t}(x)}{T}\right)}$$ 
where $T$ is the temperature. Let the p.m.f of this acquired sample be $\hat{P}_{\alpha_t}(x)$.

Similarly, if we had access to the actual $f$, we would be using the acquisition function: 
$$x \sim \frac{\exp\left(-\nicefrac{f(x)}{T}\right)}{\sum_{x\in \{-1,1\}^d} \exp \left(-\nicefrac{f(x)}{T}\right)}.$$
Let the p.m.f of this acquired sample be $P(x)$.

Now, we show a result that implies the following under some additional technical condition: \textit{Until the acquisition function based on $\hat{f}_{\alpha_t}$ yield samples which is close in KL divergence to samples yielded by acquisition function based on $f$, average $\phi_{t-1}- \phi_{t}$ (as in Lemma \ref{lemma_positive}) is large.}

Let $\hat{Z}= \sum_{x} \exp \left( - \hat{f}_{\alpha_t}(x)/T \right)$ be the partition function. Similarly, let $Z$ be the partition function associated with $P(x)$.

\begin{theorem}[Theorem \ref{thm:Acquisition} restated]
 Let $$-1 \leq \hat{f}_{\alpha_t}(x), f(x) \leq 1.$$ If at any time $t$, and for a temperature $T$, we have for some $\epsilon>0$: \[ \bigg \lvert D_{\mathrm{KL}}(\hat{P}_{\alpha_t} \lVert P) -  \log \left ( \frac{ Z}{\hat{Z}} \right ) \bigg \rvert \geq \epsilon ,\]
 then $\mathbb{E}_{\hat{P}_{\alpha_t}} [\phi_{t-1}-\phi_t] \geq 2 \eta \lambda \epsilon^2 T^2 - \eta^2$. Here, $D_{\mathrm{KL}}$ is defined with respect to $\log_e$ for convenience.
\end{theorem}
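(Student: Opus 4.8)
The plan is to combine the per-step bound of Lemma~\ref{lemma_positive} with an explicit evaluation of the KL-divergence appearing in the hypothesis. First I would take the expectation of the inequality in Lemma~\ref{lemma_positive} over the acquired sample $x_t \sim \hat{P}_{\alpha_t}$. Since that lemma (via its Remark) holds pointwise for \emph{any} fixed evaluation point, taking expectations is legitimate and yields
\[
\mathbb{E}_{\hat{P}_{\alpha_t}}[\phi_{t-1}-\phi_t] \geq 2\,\eta\,\lambda\,\mathbb{E}_{\hat{P}_{\alpha_t}}\big[(\hat{f}_{\alpha_t}(x)-f(x))^2\big] - \eta^2.
\]
The task therefore reduces to the single second-moment estimate $\mathbb{E}_{\hat{P}_{\alpha_t}}[(\hat{f}_{\alpha_t}(x)-f(x))^2] \geq \epsilon^2 T^2$.

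The key step is to identify the quantity that the hypothesis controls. Writing out the log-ratio of the two Gibbs distributions, the exponents are linear in $\hat{f}_{\alpha_t}$ and $f$, so the partition functions factor out as a single additive constant:
\[
\log\frac{\hat{P}_{\alpha_t}(x)}{P(x)} = \frac{f(x)-\hat{f}_{\alpha_t}(x)}{T} + \log\frac{Z}{\hat{Z}}.
\]
Taking the expectation under $\hat{P}_{\alpha_t}$ and observing that the constant $\log(Z/\hat{Z})$ is exactly the term subtracted in the hypothesis, I obtain
\[
D_{\mathrm{KL}}(\hat{P}_{\alpha_t}\lVert P) - \log\frac{Z}{\hat{Z}} = \frac{1}{T}\,\mathbb{E}_{\hat{P}_{\alpha_t}}\big[f(x)-\hat{f}_{\alpha_t}(x)\big].
\]
Thus the hypothesis is precisely a lower bound on the absolute value of the \emph{first} moment of the prediction error under $\hat{P}_{\alpha_t}$, namely $\big|\mathbb{E}_{\hat{P}_{\alpha_t}}[f(x)-\hat{f}_{\alpha_t}(x)]\big| \geq \epsilon T$.

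Finally I would pass from this first-moment bound to the required second moment via Jensen's inequality (equivalently, nonnegativity of variance), $\mathbb{E}[Y^2]\geq (\mathbb{E}[Y])^2$, which gives $\mathbb{E}_{\hat{P}_{\alpha_t}}[(\hat{f}_{\alpha_t}(x)-f(x))^2] \geq \epsilon^2 T^2$; substituting into the expectation of Lemma~\ref{lemma_positive} yields the claim. I do not anticipate a deep technical obstacle: the crux is simply recognizing that the partition-function correction in the hypothesis exactly isolates the expected prediction error, converting a statement about distributional closeness into a clean moment bound. The points requiring care are that Lemma~\ref{lemma_positive} must apply for an arbitrary fixed $x_t$ before averaging, and that the boundedness assumption $-1\le \hat{f}_{\alpha_t},f\le 1$ is what controls the range of the individual losses $\ell_i^{t}$ and hence legitimizes the $-\eta^2$ slack term inherited from Lemma~\ref{lemma_positive}.
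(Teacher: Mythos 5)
Your proposal is correct and follows essentially the same route as the paper: both identify that $D_{\mathrm{KL}}(\hat{P}_{\alpha_t}\lVert P) - \log(Z/\hat{Z}) = \tfrac{1}{T}\,\mathbb{E}_{\hat{P}_{\alpha_t}}[f(x)-\hat{f}_{\alpha_t}(x)]$, use Jensen's inequality to pass from the first moment to the second moment bound $\mathbb{E}[(\hat{f}_{\alpha_t}-f)^2]\geq \epsilon^2 T^2$, and then take the expectation of Lemma~\ref{lemma_positive} over $x_t\sim\hat{P}_{\alpha_t}$. The only cosmetic difference is that the paper routes through the intermediate bound $\tfrac{1}{T}\mathbb{E}[\lvert\hat{f}_{\alpha_t}-f\rvert]$ before squaring, which is equivalent to your single application of $(\mathbb{E}[Y])^2\leq\mathbb{E}[Y^2]$.
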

\begin{proof}


We have:
 \begin{align}
    \epsilon \leq \bigg \lvert D_{\mathrm{KL}}(\hat{P}_{\alpha_t} \lVert P) - 
    \log \left(\frac{Z}{\hat{Z}} \right)  \bigg \rvert     &=  \bigg  \lvert \mathbb{E}_{\hat{P}_{\alpha_t}} [ -\frac{1}{T} (\hat{f}_{\alpha_t}(x)- f(x))]  \nonumber \\
    & + \log \left ( \frac{ Z}{\hat{Z}} \right ) - \log \left ( \frac{ Z}{\hat{Z}} \right ) \bigg \rvert \nonumber \\
    & = \bigg \lvert \mathbb{E}_{\hat{P}_{\alpha_t}} [ -\frac{1}{T} (\hat{f}_{\alpha_t}(x)- f(x))] \bigg \rvert \nonumber \\
    &\overset{a}{\leq} \frac{1}{T}\mathbb{E}_{\hat{P}_{\alpha_t}} [ \lvert  \hat{f}_{\alpha_t}(x)- f(x) \rvert] \nonumber \\
    & \overset{b}{\leq} \frac{1}{T}\sqrt{\mathbb{E}_{\hat{P}_{\alpha_t}} [ \lvert  \hat{f}_{\alpha_t}(x)- f(x) \rvert^2 ]}
 \end{align}
 Justifications: (a) Jensen's inequality applied to  $|x|$. (b) Jensen's inequality applied to the function $x^2$, i.e  $(\mathbb{E}[\lvert X \rvert])^2 \leq \mathbb{E}[ X^2 ] $.

Combined with Lemma \ref{lemma_positive}, this implies:
\begin{align}
\mathbb{E}_{\hat{P}_{\alpha_t}} [\phi_{t-1}-\phi_t] \geq 2 \eta \lambda \epsilon^2   T^2 - \eta^2.
\end{align}

\end{proof}

\section{Learning Rate in Algorithm \ref{algo:COMEX}}
\label{app:learning_rate}

In Algorithm \ref{algo:COMEX}, we use the anytime learning rate schedule of \cite{adaptive_EG}, which is a decreasing function of time $t$. The learning rate at time step $t$ is given by:
\begin{equation}
	\eta_t = \min \bigg \{ \frac{1}{e_{t-1}}, c \sqrt{\frac{\ln{(2 \, p)}}{v_{t-1}}} \bigg \},
\end{equation}
where $c \overset{\Delta}{=} \sqrt{2(\sqrt{2} - 1)/(\exp(1)-2)}$ and 
\begin{align*}
    z_{j, t}^{\gamma} &\overset{\Delta}{=} - 2 \, \gamma \, \lambda \, \ell_t \, \psi_j(x_t) \\
    e_t &\overset{\Delta}{=} \inf_{k \in \mathbb{Z}} \bigg \{ 2^k: 2^k \geq \max_{s \in [t]} \max_{ \substack{j, k \in [p] \\ \gamma, \mu \in \{-, +\}}} | z_{j, s}^{\gamma} - z_{k, s}^{\mu} |  \bigg \} \\
    v_t &\overset{\Delta}{=}  \sum_{s \in [t]} \sum_{\substack{j \in [p] \\ \gamma \in \{-, +\}} } \alpha_{j, s}^{\gamma} \bigg ( z_{j, s}^{\gamma} - \sum_{\substack{k \in [p] \\ \mu \in \{-, +\}}} \alpha_{k, s}^{\mu} z_{k, s}^{\mu} \bigg )^2.
\end{align*}

\end{document}